\documentclass{ssl}
\usepackage{graphicx} 
\usepackage[toc,page]{appendix}
\usepackage[english]{babel}
\usepackage{amsmath, amsfonts, amsthm, amssymb}
\usepackage{array,booktabs, graphicx, tabularx, ,pdflscape, siunitx, makecell}

\title{Form and Function: Machine Unlearning as a Problem of Misaligned States}
\subtitle{Machine unlearning is defined by its performance, but this leaves another component of the model unexamined: its geometry.}
\author{Kennon Stewart\affilnum{1,2}}
\affiliation{\affilnum{1} Second Street Labs, Detroit, MI, USA\\
\affilnum{2} Department of Statistics, University of Michigan, Ann Arbor, MI, USA}
\corrauth{Kennon Stewart, Second Street Labs}
\email{kennon@secondstreetlabs.io}
\date{April 2026}
\version{v1}
\preprintnumber{2026-002}
\lablogo{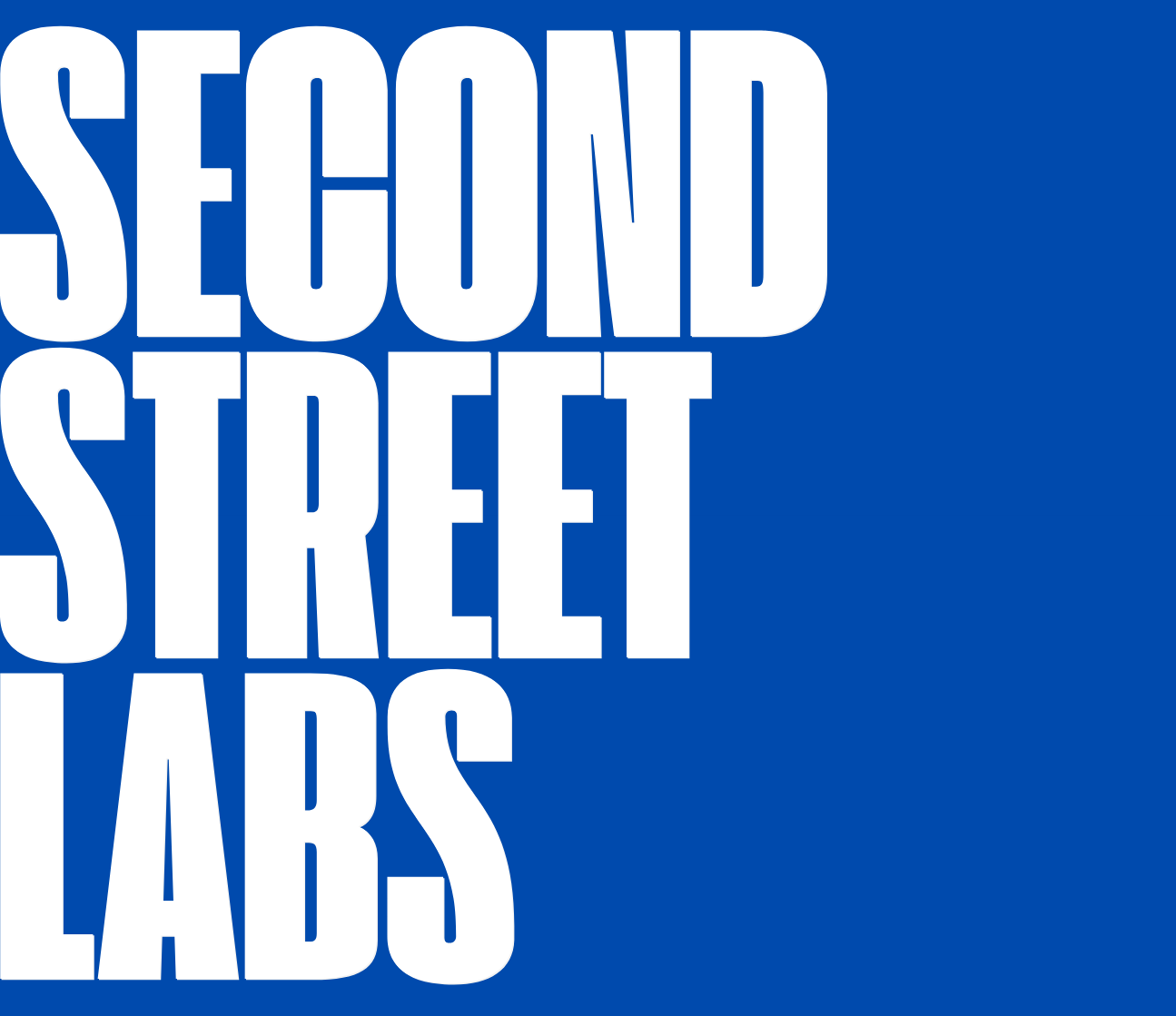}
\keywords{machine unlearning, privacy, convex optimization}

\begin{document}
\begin{abstract}
We formulate machine unlearning for online L-BFGS as a counterfactual state-alignment problem. Given an actual event stream and a deletion-edited counterfactual stream, the target of unlearning is the optimizer state that would have arisen had the deleted samples never been processed. We introduce state-aware metrics that separately measure parameter error, memory-operator error, combined state error, and update-direction error. The memory metric compares the inverse-Hessian actions induced by the o-L-BFGS memory, rather than treating curvature pairs as of finite influence. Under convexity assumptions, we derive a recursive bound on counterfactual state deviation. We then evaluate a state-aware benchmark of deletion interventions, including memory-only and parameter-only corrections, against an counterfactual oracle model. These results show that unlearning for online L-BFGS is not merely a parameter-correction problem: it requires alignment with a realizable counterfactual optimizer state.
\end{abstract}
\maketitle

\section{Introduction}
Machine unlearning is an increasingly popular area of research, but has an evaluation problem. It's simple enough to conceive of the goal for unlearning a piece of data: the unlearned model should act identically to the model having never seen the data at all. But the function of a model is very different from its form; the description of one without the other is simply incomplete.

This is especially the case for stateful optimizers, which are composed of more than parameters and predictions. Optimizer states exhibit momentum, preconditioned states, and curvature memory that are undescribed by the model's predictions.

We study this issue in the online L-BFGS optimizer specifically for the task of machine unlearning. The optimizer maintains a finite memory of curvature pairs used to construct an inverse-Hessian approximation. This lasting geometric memory tangibly changes the update direction applied to future gradients. A deletion request therefore raises a structural question. It is not enough to ask whether the current parameter vector performs like the counterfactual parameter vector. We must also ask whether the coupled optimizer state, consisting of parameters and curvature memory, could have arisen from the deletion-edited event stream.

We formulate this as a counterfactual state-alignment problem. Given an actual event history and a deletion-edited counterfactual history, the unlearning target is the optimizer state generated by the counterfactual stream. This reframes unlearning as an \textit{ontological} question about the state of the learning system. Specifically, can it assume \textit{both the form and function} of our counterfactual ideal?

\section{Contributions}
\begin{enumerate}
    \item We reframe unlearning as a state-alignment task where the optimizer aims to recover the counterfactual state having never learned deleted data.
    \item We demonstrate the existence of two forms of memory in a second-order optimizer: the curvature information explicitly encoded into the state, and the indirect memory that persists once a curvature point is "forgotten".
    \item We introduce unlearning benchmarks that measure state deviation, specifying unlearning for the entire optimizer state.
    \item We provide theoretical and empirical evidence that parameter-only and memory-only deletion interventions are insufficient for reliable unlearning, while replay-based reconstruction can recover a realizable counterfactual parameter-memory state over a finite horizon.
\end{enumerate}

\section{Related Work}
\subsection{Unlearning}
Machine unlearning literature is broadly divided into two methods: exact and approximate \cite{xu_machine_2024}.

Exact unlearning removes unlearned data from the model completely. Work by Bourtoule et al. \cite{bourtoule_machine_2020} proposed the SISA method, which partitions the training set into isolated components when training the model. In order to erase data, only a segment of the production model need be pruned, allowing for total erasure of the data. But such methods have been shown to reduce the utility of the model and requires the training set to be kept in-memory. This makes it infeasible for memory-constrained and online environments.

Approximate unlearning brings the unlearned model within a statistical distance of the counterfactual. Suriyakumar and Wilson provide a particularly comprehensive overview \cite{suriyakumar_algorithms_2022}. The literature commonly recommends parameter perturbations to achieve some guaranteed similarity to the counterfactual, with complete retraining being the exact and expensive comparator \cite{sekhari_remember_2021}. Later works allows for compressed representations of curvature memory that reduce time and space complexities \cite{qiao_hessian-free_2025} and even online unlearning with regret guarantees \cite{hu_online_2025, shen_machine_2025}.

Prior work exposed the fallibility of current approximate unlearning methods, notably the presence of residual information in the unlearned model \cite{liu_threats_2025, chen_when_2021, stewart_shape_2026}. This demonstrates the insufficiency of unlearning evaluation by comparing performance, and even persists under the more restrictive parameter identify requirement. 

There remains an opportunity for an unlearning benchmark specifically for high-dimensional learning. Though black-box metrics (like the predictive performance of the optimizer) is easily evaluated, it doesn't fully capture the internal dynamics of the unlearned model state, which may differ substantially from that of the counterfactual \cite{jeon_information_2024, stewart_shape_2026}.

\subsection{Second-Order Optimization}
Gradient descent is a family of optimization techniques popular for their interpretability. Numerous works have noted weakness of first-order descent in poorly-conditioned environments, where saddle points and shallow surfaces can slow descent. Second-order optimization improves on prior work by collecting curvature information and accounts for ill-conditioning in the loss surface \cite{cesa-bianchi_prediction_2006}.

Newton's Method is the canonical second-order optimizer, relying explicitly on the Hessian of the loss function to account for conditioning. The combination of gradient and curvature information allows for identical performance on any affine transformation of a loss surface \cite{boyd_convex_2023}. Though the method is useful in particular learning tasks, it's infeasible in high-rank learning where $m >> n$ due to its expensive $O(d^2)$ Hessian inversion operation.

Stochastic and quasi-newton methods emerged as alternatives that addressed such intense calculation. They replace a full Hessian inversion with approximations to varying degrees of compression. The popular BFGS method maintains a Hessian-like preconditioner that is used in place of the Hessian itself \cite{byrd_limited_1995}. This was succeeded by the L-BFGS method, designed specifically for memory-constrained environments, which recursively updates an approximation of the Hessian in its inverse form, negating the need for repeated inversions. More recent work has focused on maintaining these inverse Hessian approximations stochastically, allowing for a method that approaches online learning in its function with a constant $O(\tau d)$ memory constraint \cite{qiao_hessian-free_2025}.

\section{Problem Setup}
\subsection{Sequential Learning and Unlearning}
We define a stream of events, which is produced by some dynamic data-generating process.

\begin{definition}[Actual event stream]
Let $\mathcal X$ be the sample space and let $\mathcal I$ be a countable set of sample indices. An event at time $t$ is a tuple
$$
    e_t = (o_t,i_t,z_t),
$$
where $o_t \in \{\mathrm{insert},\mathrm{delete}\}$ is the event type, $i_t \in \mathcal I$ is a sample index, and $z_t \in \mathcal X \cup \{\varnothing\}$ is the sample payload. If $o_t=\mathrm{insert}$, then $z_t\in\mathcal X$ is inserted into the learner. If $o_t=\mathrm{delete}$, then $z_t=\varnothing$ and the event requests removal of the previously inserted sample with index $i_t$.

The actual event history up to time $t$ is $H_t = (e_1,\ldots,e_t).$
\end{definition}

We distinguish the actual event history seen by the learner with the counterfactual, which excludes the points to have been deleted. The ideal unlearned model would resemble the counterfactual not only in model behavior, but model geometry as well.

\begin{definition}[Deletion-edited counterfactual history]
For a deletion set $D\subseteq I_t^{+}$, the deletion-edited counterfactual
history is the subsequence
$$
    H_t^{-D}
    =
    \bigl(e_s : s\le t,\ i_s\notin D\bigr),
$$
with the original temporal order preserved.

For the realized deletion requests by time $t$, we write
$$
    H_t^{-} := H_t^{-D_t}.
$$
\end{definition}

The two event streams produce separate learner states, whose differences indicate the residue of unlearned information.

\begin{definition}[Actual and counterfactual learner states]
Let $\Phi_t$ denote the optimizer update map. The actual learner state evolves as
$$
    \theta_{t+1} = \Phi_t(\theta_t,e_t),
    \qquad \theta_t=(w_t,Z_t).
$$

For a deletion set $D$, the counterfactual learner state $\theta_t^{-D}$ is the
state obtained by applying the same optimizer recursively to the edited history
$H_t^{-D}$:
$$
    \theta_t^{-D} = \Phi_{H_t^{-D}}(\theta_0).
$$
\end{definition}

In the case of the online L-BFGS optimizer, the optimizer state $Z_t$ contains a finite set of curvature pairs $\{ (v_i, r_i) \}_{i=t-\tau}^{t-1}$ used to approximate the curvature of the loss surface. This creates an inherent forgetting mechanism for unlearning data older than the curvature window $\tau$. So long as the learner is sufficiently insensitive with respect to the deletion candidates, the explicit representation of the unlearned data is erased when the curvature pairs are updated.

\begin{definition}[Unlearning operator]
An unlearning operator at time $t$ is a map $\mathcal U_t:\mathcal Z \times 2^{\mathcal H_t} \to \mathcal Z$
that takes the current optimizer state $Z_t$ and a deletion set $U_t$ and returns an unlearned state
$$
\widetilde Z_t = \mathcal U_t(Z_t,D_t).
$$
\end{definition}

\begin{definition}[State-alignment error]
\label{def:state-alignment-error}
Given a metric $d_{\mathcal Z}$ on learner states, the state-alignment error is
$$
\Delta_t = d_{\mathcal Z}(\widetilde \theta_t,\theta_t^{-U}),
$$
where $\theta_t^{-U}$ is the counterfactual optimizer state generated by the deletion-edited history.
\end{definition}

The state deviation bound is a new way of measuring the alignment of an unlearned model, but differs from the probabilistic bounds used to define certified machine unlearning. In the case of states that deviate by at most $\alpha_t,$ we can write the state deviation bounds with respect to canonical certified unlearning definitions.

\begin{definition}[$(\epsilon,\delta)$-State Certified Unlearning]
\label{def: state-certified-unlearning}
Assume that for all deletion sets $U$ with $|U| \leq m,$
$$
    \Pr[d_{\Theta}(\bar \theta_{t}^{U}, \theta_{t}^{-U}) \leq \alpha] \geq 1 - \beta.
$$
The randomized unlearning operation satisfies $(\varepsilon, \delta + \beta)$-state-certified unlearning if 
$$
\sigma \geq \frac{
    \alpha \sqrt{2 \log{(1.25 / \delta)}}
}{
    \epsilon
}.
$$
\end{definition}

In application, the state deviation bounds can be substituted for optimizer-specific unlearning certificates. This describes the strength and shape of noise required for injection to certify indistinguishability from the counterfactual ideal. The connection relies on the contractivity assumption for the optimizer descent. Specifically, the calibrated noise injection is determined by the highest possible state deviation between the observed and counterfactual models, $\alpha_t.$

We discuss extensions in the Appendix, including the K-Norm Gradient Mechanism, which provides a noise certificate in the same metric space as the optimizer.

\section{Theoretical Results}
\subsection{Unlearning with an o-LBFGS optimizer}
On the other hand, deleted information may persist in the trajectory of later curvature pairs. This stems from the fact that the state of the optimizer at time $t$ influences the path and state of later curvature updates $\{ (v_i, r_i) \}_{i=t-\tau}^{t-1}$. We demonstrate that incoming information, reflecting the counterfactual, will bound and diminish the influence of the perturbation with sufficient time.

\begin{proposition}[Direct memory is bounded for finite curvature pairs]
For an o-LBFGS optimizer with memory window $\tau$, any curvature pair generated at time $s$ is no longer stored in the direct memory of $Z_t$ for all $t>s+\tau$.
\end{proposition}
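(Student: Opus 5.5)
The argument is a direct induction on the FIFO memory-update rule of o-LBFGS. We first fix the convention already implicit in the problem setup: $Z_t$ stores exactly the pairs $\{(v_i,r_i)\}_{i=t-\tau}^{t-1}$, one new pair per update. The update map $\Phi_t$ acts on the memory as a sliding window. Passing from $Z_t$ to $Z_{t+1}$ appends the newly formed pair $(v_t,r_t)$ and, once the buffer holds $\tau$ pairs, discards the oldest pair $(v_{t-\tau},r_{t-\tau})$. Two degenerate cases need a convention. If a step produces no admissible pair, for instance because the curvature condition $v_t^\top r_t>0$ fails and the pair is skipped, or if a delete event yields no pair, we index pairs by \emph{update step} rather than by wall-clock time. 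The statement is then read with $s$ and $t$ counted in memory-update steps. This is the one point where care is needed, and I would state it explicitly as an assumption.

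The key invariant is
$$
\mathrm{idx}(Z_t)\subseteq\{t-\tau,\ldots,t-1\}
$$
for every $t$, where $\mathrm{idx}(Z_t)$ denotes the set of generation times of pairs stored in $Z_t$. We prove it by induction on $t$. For the base case, $Z_0$ is empty, or holds an initialization that carries no data-generated pair. For the inductive step, assume the invariant at $t$. Then $\mathrm{idx}(Z_{t+1})\subseteq(\mathrm{idx}(Z_t)\setminus\{t-\tau\})\cup\{t\}\subseteq\{t+1-\tau,\ldots,t\}$, because the only index that could fall below $t+1-\tau$ is $t-\tau$, and the eviction rule removes it.

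With the invariant in hand, the proposition follows at once. Take $t>s+\tau$, i.e.\ $s<t-\tau$. Then $s\notin\{t-\tau,\ldots,t-1\}$, so the pair generated at time $s$ does not appear in $Z_t$. Note also that $Z_t$ contains no copy or function of that pair as a stored entry: each stored pair depends only on the iterates and gradients at its own generation time. That pair may have influenced later iterates $w_{s+1},\dots$ and hence later pairs. This indirect memory is exactly what the proposition excludes under the word ``direct,'' and I would note it as the motivation for the subsequent recursive deviation bound.

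The main obstacle is not mathematical. It is making the memory-update convention precise enough that the claim is true, in particular the skipped-pair and delete-event cases above. I would resolve this by indexing by update step, so that the induction goes through verbatim.
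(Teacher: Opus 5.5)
Your proof is correct and is essentially the paper's reasoning made explicit: the paper gives no separate proof, treating the claim as immediate from its definition of $Z_t$ as the sliding window $\{(v_i,r_i)\}_{i=t-\tau}^{t-1}$, and that window is exactly the invariant you establish by induction. Your caveat about skipped pairs and delete events (indexing by memory-update step rather than wall-clock time) is a real precision that the paper leaves implicit, since under a skip-and-retain rule a pair could otherwise stay in memory longer than $\tau$ wall-clock steps.
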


A notable difference between our method and that of prior work is the metric used to diagnose the model's misalignment with the counterfactual. Prior work emphasizes regret and empirical risk as indicators for model recovery \cite{qiao_hessian-free_2025, sekhari_remember_2021}. We note that these methods don't encompass the entire model state, where residual information may persist \cite{chen_when_2021, bertran_reconstruction_2024, cooper_machine_2025, stewart_shape_2026}.

For the sake of brevity, we relegate the standard convexity and smoothness assumptions to Appendix \ref{sec:convexity-assumptions}. We largely follow those outlined in standard stochastic optimization and unlearning literature \cite{qiao_hessian-free_2025, byrd_limited_1995}. We restrict our attention to convex optimization, but explore nonconvex optimizers in future work.

\begin{lemma}[One-step state-error recursion]
\label{lem:state-error-recursion}
Suppose the optimizer update map is $\rho$-contractive in $d_{\mathcal Z}$ and the deletion-induced perturbation at time $t$ is bounded by $\eta_t\epsilon_t$. Then
$$
\Delta_{t+1}
\le
\rho \Delta_t + \eta_t\epsilon_t.
$$
\end{lemma}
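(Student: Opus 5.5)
The proof is a triangle-inequality argument that splits the one-step error into a propagated part and a freshly injected part. Write the unlearned state at time $t+1$ as the actual update map applied to the unlearned state at time $t$, possibly followed by an intervention. Write the counterfactual state at $t+1$ as $\Phi_t$ applied to the counterfactual state at $t$, using the event of the edited stream $H_t^{-}$. The key device is an intermediate state
$$
\widehat\theta_{t+1} := \Phi_t(\theta_t^{-U},e_t),
$$
the counterfactual state advanced by the \emph{actual} event. The triangle inequality in $d_{\mathcal Z}$ then gives
$$
\Delta_{t+1} \le d_{\mathcal Z}\bigl(\widetilde\theta_{t+1},\widehat\theta_{t+1}\bigr) + d_{\mathcal Z}\bigl(\widehat\theta_{t+1},\theta_{t+1}^{-U}\bigr).
$$

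The first term compares two applications of the same map $\Phi_t$ with the same event, at the inputs $\widetilde\theta_t$ and $\theta_t^{-U}$. By the $\rho$-contractivity hypothesis it is at most $\rho\, d_{\mathcal Z}(\widetilde\theta_t,\theta_t^{-U}) = \rho\Delta_t$, using Definition~\ref{def:state-alignment-error}.

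The second term compares, at the common state $\theta_t^{-U}$, the update driven by the actual event with the update driven by the edited stream. This is exactly the deletion-induced perturbation at time $t$. Concretely, it is a step size $\eta_t$ times the discrepancy $\epsilon_t$ between the gradient and curvature-pair inputs, including any residual effect of the unlearning operator $\mathcal U_t$ at that step. The hypothesis bounds it by $\eta_t\epsilon_t$. Adding the two bounds yields $\Delta_{t+1}\le\rho\Delta_t+\eta_t\epsilon_t$.

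The main obstacle is making the second term precise. The lemma treats it as an assumption, so the work lies in checking that the stated ``perturbation'' is measured at the common base point $\theta_t^{-U}$. If it were measured at $\widetilde\theta_t$ instead, the roles of the two terms would swap and one would need contractivity of the edited map rather than the actual one.

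I would therefore state explicitly that contractivity holds uniformly over the update maps induced by both streams. For o-LBFGS this is plausible under the strong convexity and smoothness assumptions of Appendix~\ref{sec:convexity-assumptions}, given bounded eigenvalues of the memory operator. With uniformity in hand, either ordering of the intermediate state works. A second subtlety is time alignment when a deleted insert is simply skipped in $H_t^{-U}$. There I would treat the counterfactual step as an identity map and absorb the skipped update into $\eta_t\epsilon_t$; the identity is $1$-Lipschitz, so I would still bound that step's propagation by $\rho$ via the uniform contractivity assumption.
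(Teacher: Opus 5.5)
Your argument is correct and is essentially the paper's proof: the paper inserts the same intermediate point $\phi_t(\theta_t^{-U})$, takes $\widetilde\theta_{t+1}=\phi_t(\widetilde\theta_t)$ with no further intervention after the step, bounds the first term by $\rho\Delta_t$ through contractivity of the actual map, and takes the second term $d_\theta(\phi_t(\theta_t^{-U}),\phi_t^{-U}(\theta_t^{-U}))$ to be the assumed perturbation $\eta_t\epsilon_t$. Your ordering only uses contractivity of the actual map, so the uniform-contractivity assumption in your last paragraph is unnecessary, and it would in fact fail when the skipped counterfactual step is the identity, which is not $\rho$-contractive for $\rho<1$.
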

\begin{proof}
The state alignment error decomposes into two pieces: the difference in updates in the unlearned model, and that in the counterfactual model.

\begin{align*}
  \Delta_t &= d_\theta(\tilde \theta_{t+1}, \theta_{t+1}) \\
    &= d_{\theta}(\phi_t(\tilde \theta_t), \phi_t^{-U}(\theta_t^{-U})) \\
    &\leq d_{\theta}(\phi_t(\tilde \theta_t), \phi_t(\theta_{t}^{-U})) + d_{\theta}(\phi_t(\theta_t^{-U}), \phi_t^{-U}(\theta_{t}^{-U})) \\
    &\leq \rho \Delta_t + \eta_t \epsilon_t
\end{align*}
\end{proof}

Which justifies the state deviation bounds for the recursive optimizer state, relying on a discrete Gronwall inequality \cite{liao_discrete_2018}.

\begin{theorem}[Bounded counterfactual state deviation for finite-memory o-LBFGS]
\label{thm: bounded-counterfactual-state-deviation}
Under strong convexity, smoothness, bounded gradients, bounded curvature pairs, and contractive o-L-BFGS updates, the state-alignment error satisfies
$$
\Delta_t
\le
\rho^{t-t_0}\Delta_{t_0}
+
\sum_{s=t_0}^{t-1}\rho^{t-1-s}\eta_s\epsilon_s.
$$
The optimizer stores only $O(\tau d)$ memory.
\end{theorem}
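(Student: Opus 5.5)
The bound will follow by unrolling Lemma~\ref{lem:state-error-recursion} from $t_0$ to $t$, i.e.\ a discrete Gronwall argument. The memory claim is then read off from the finite-window structure of the direct memory.

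\textbf{Step 1: verify the lemma's hypotheses.} We need that, for every $s\ge t_0$, the o-L-BFGS update map is $\rho$-contractive in $d_{\mathcal Z}$ with some $\rho<1$ independent of $s$. Strong convexity and smoothness (Appendix~\ref{sec:convexity-assumptions}) plus bounded curvature pairs give uniform spectral bounds $\mu_H I\preceq H_s\preceq M_H I$ on the inverse-Hessian approximation built from the $\tau$ stored pairs. With a suitable step size, the parameter component $w\mapsto w-\eta_s H_s\nabla f(w)$ is then a contraction. The curvature-pair component is Lipschitz in $w$ because gradients are Lipschitz. The combined map is contractive by assumption; the hypothesis ``contractive o-L-BFGS updates'' is stated precisely to supply this. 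We also need the per-step discrepancy $d_{\mathcal Z}(\Phi_s(\theta_s^{-U}),\Phi_s^{-U}(\theta_s^{-U}))\le\eta_s\epsilon_s$. This comes from bounded gradients: on steps where the actual and counterfactual streams differ, the two updates differ by at most a step size times a bounded gradient or curvature-pair difference. On steps where they agree, the discrepancy is zero, which is consistent with taking $\epsilon_s=0$.

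\textbf{Step 2: induction on $t$.} The base case $t=t_0$ is trivial, since the sum is empty. Assume the claim holds at $t$. Applying Lemma~\ref{lem:state-error-recursion} gives
\begin{align*}
\Delta_{t+1}&\le \rho\Delta_t+\eta_t\epsilon_t\\
&\le \rho^{t+1-t_0}\Delta_{t_0}+\sum_{s=t_0}^{t-1}\rho^{t-s}\eta_s\epsilon_s+\eta_t\epsilon_t,
\end{align*}
and the final term is exactly the $s=t$ summand $\rho^{0}\eta_t\epsilon_t$. This closes the induction. Equivalently, one can cite the discrete Gronwall inequality of \cite{liao_discrete_2018} with constant coefficient $\rho$.

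\textbf{Step 3: memory.} By the Proposition on direct memory, $Z_t$ holds at most $\tau$ pairs $(v_i,r_i)\in\mathbb R^d\times\mathbb R^d$, and $w_t\in\mathbb R^d$. The state therefore occupies $(2\tau+1)d=O(\tau d)$ memory. Two-loop recursion evaluates $H_t$ implicitly, so no $d\times d$ matrix is ever stored.

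The main obstacle is Step 1: justifying a \emph{uniform} contraction factor $\rho$ for the coupled parameter--memory map. The curvature memory depends on past iterates, so a perturbation in $w$ propagates into later pairs. I would first try a weighted metric $d_{\mathcal Z}=\|w-w'\|+\lambda\,\|H-H'\|_{\mathrm{op}}$ with $\lambda$ small enough that the memory's Lipschitz dependence on $w$ is absorbed into the parameter contraction. Failing that, I would take contractivity as the stated hypothesis, as the theorem does, and keep the induction as the substantive argument.
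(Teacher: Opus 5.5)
Your proposal is correct and follows the paper's route. The paper obtains the bound by unrolling the one-step recursion of Lemma~\ref{lem:state-error-recursion} through a discrete Gronwall inequality, taking contractivity and the per-step perturbation bound $\eta_s\epsilon_s$ as hypotheses; this is exactly your Step~2 induction. Your Step~1 derivation of contraction is not load-bearing, since the paper simply assumes a uniform $\rho<1$ in the appendix, and your $(2\tau+1)d$ count is the intended justification of the $O(\tau d)$ memory claim.
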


The theorem above decomposes the counterfactual error into two parts: the initial disruption caused by deletion and the residual memory in the optimizer state. As mentioned, the error persists in the counterfactual model for a period of time bounded by the number of curvature pairs. The sum $\sum_{s=t_0}^{t-1}\rho^{t-1-s}\eta_s\epsilon_s$ represents the information stored in the indirect memory, the effects of which decay geometrically with the strength of the map's contraction.

\subsection{Defining Certificates on the Optimizer State}
Since we have bounds on the state deviation of the optimizer state, we can then define the conditions under which unlearning is certified not only for the parameter, but for the entire optimizer state, $\theta_t.$ 

To do this, we step back from unlearning as a deterministic operation. We randomize the event stream over which our model learns (and unlearns) before injecting a noise to the parameter state that is calibrated to the one-step recursion error defined in Theorem \ref{thm: olbfgs-state-certificate}.

\begin{theorem}[o-LBFGS Unlearning Certificate]
\label{thm: olbfgs-state-certificate}
The randomized state $\tilde \theta_{t}^{U} = \bar \theta_{t}^{U} + \epsilon$ where $\epsilon \sim \mathcal N(0, \sigma^2I)$ is certifiably unlearned if 
$$
\sigma_t \geq \frac{
    \alpha_t(m,\tau) \sqrt{2 \log{(1.25 / \delta)}}
}{
\epsilon
}
$$
where $\alpha_t(m,\tau) =\rho^{t - t_0}\Delta_{t_0} + \sum_{s = t_0}^{t-1} \rho^{t-1-s} \eta_{s} \epsilon_{s}(m, \tau).$ Exact unlearning is only obtained when $\alpha_t(m,\tau) = 0.$
\end{theorem}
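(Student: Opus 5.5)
The plan is to reduce the statement to the classical Gaussian mechanism, with sensitivity supplied by Theorem \ref{thm: bounded-counterfactual-state-deviation}. Fix a deletion set $U$ with $|U|\le m$. Compare the two randomized outputs: the unlearned-and-noised state $\bar\theta_t^{U}+\epsilon$ and the counterfactual-and-noised state $\theta_t^{-U}+\epsilon$, where both use the same noise law $\mathcal N(0,\sigma_t^2 I)$. Certified unlearning then amounts to $(\varepsilon,\delta)$-indistinguishability of these two distributions on every measurable set. The Gaussian mechanism gives this whenever the $\ell_2$ distance between the two means is at most $\Delta_2$ and $\sigma\ge \Delta_2\sqrt{2\log(1.25/\delta)}/\varepsilon$. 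The whole task is therefore to show that $\alpha_t(m,\tau)$ is a valid sensitivity.

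First, I take $d_{\Theta}$ to be (or to dominate) the Euclidean norm on the component that receives noise. If the memory component $Z_t$ is also released, I embed it through the inverse-Hessian action metric, so that $d_{\mathcal Z}$ upper bounds the $\ell_2$ distance of the released vector. Second, I invoke Theorem \ref{thm: bounded-counterfactual-state-deviation}. It is obtained by unrolling Lemma \ref{lem:state-error-recursion} from $t_0$ to $t$, and it gives
\[
d_{\Theta}(\bar\theta_t^{U},\theta_t^{-U})\le \rho^{t-t_0}\Delta_{t_0}+\sum_{s=t_0}^{t-1}\rho^{t-1-s}\eta_s\epsilon_s(m,\tau)=\alpha_t(m,\tau).
\]
This bound must hold uniformly over all $|U|\le m$. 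The dependence on $m$ and $\tau$ enters through the per-step perturbations $\epsilon_s$. These are bounded using bounded gradients and bounded curvature pairs: at most $m$ deleted samples contribute to each step, and by the proposition on direct memory, a deleted pair affects the stored memory for at most $\tau$ steps. Third, I plug $\Delta_2=\alpha_t(m,\tau)$ into the Gaussian mechanism to obtain the stated $\sigma_t$.

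If the deviation bound holds only with probability $1-\beta$ under the randomized stream, as in Definition \ref{def: state-certified-unlearning}, I condition on that event and add $\beta$ to $\delta$. For the final sentence, I will argue as follows. If $\alpha_t=0$, the two means coincide, so the output laws are identical for any $\sigma$, and we obtain exact unlearning with no noise needed. Conversely, if $\alpha_t>0$ and the bound is tight, there is some $U$ for which the means differ. In that case a deterministic release ($\sigma=0$) distinguishes the two states, so exactness requires $\alpha_t=0$.

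The main obstacle is the second step. The bound must be a true worst-case sensitivity and not a trajectory-specific one, so $\Delta_{t_0}$ and $\epsilon_s(m,\tau)$ have to be controlled uniformly over deletion sets, using only the contractivity and boundedness assumptions. The metric mismatch between $d_{\mathcal Z}$ and the $\ell_2$ norm of the noised coordinates also has to be handled. I would handle it by restricting the certificate to the parameter block $w_t$, where $d_\Theta\ge\|\cdot\|_2$ holds by definition of the combined state metric. Any certificate for the memory block would then follow by post-processing, provided $Z_t$ is recomputed from the noised $w_t$ by replay.
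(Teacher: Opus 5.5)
Your proposal is correct and is essentially the paper's own (implicit) argument. The certificate follows by taking the Gaussian-mechanism calibration in the definition of $(\varepsilon,\delta)$-state certified unlearning, setting its deviation bound $\alpha$ to the recursive bound $\alpha_t(m,\tau)$ from the bounded counterfactual state deviation theorem, and folding $\beta$ into $\delta$, exactly as you do. Your extra care (uniformity over $|U|\le m$, reconciling $d_{\Theta}$ with the $\ell_2$ norm on the noised parameter block, and the tightness needed for the ``only if'' clause) makes explicit what the paper leaves unstated; both arguments also tacitly need the usual restriction $\varepsilon<1$ for this calibration.
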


Theorem \ref{thm: olbfgs-state-certificate} describes the noise injection needed to ensure certified unlearning. This surpasses prior parameter-only noise injections by tuning the noise to the optimizer's set of curvature pairs. We suspect that similar certificates are obtainable for other optimization methods with bounded state deviations.

The primary outcome is future state-error area under the curve,
$$
\mathrm{AUC}
=
\sum_{k=0}^{H}
E_\Theta(t_{\mathrm{del}}+k),
$$
which measures cumulative deviation from the counterfactual trajectory. The experiment code also reports initial state error, final state error, future parameter-trajectory error, update-direction-error AUC, direct memory mass, average future loss, and intervention cost measured by replayed events, additional gradient evaluations, and wall-clock time.

\begin{figure}
    \centering
    \includegraphics[width=1\linewidth]{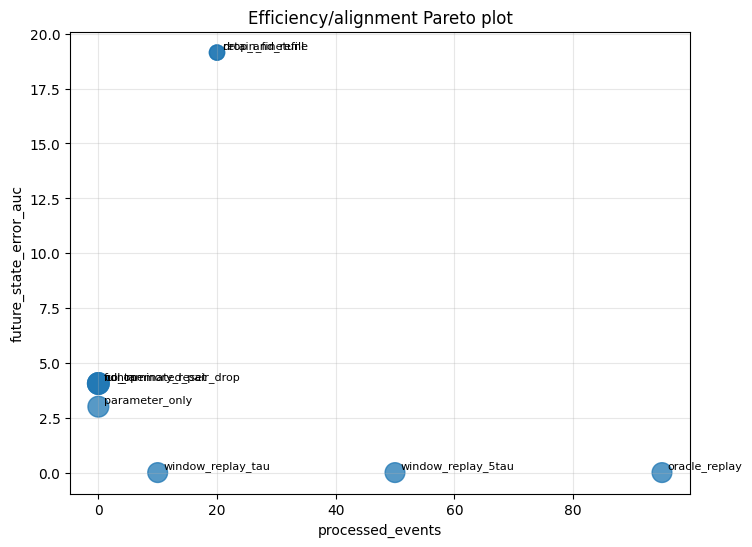}
    \caption{The efficiency of the deletion operation depends on both the size of the deleted dataset as well as its relevance to direct memory. The oracle replay baseline is completely effective, but requires the entire counterfactual event stream. For a fraction of the data, exact counterfactual recovery is possible when the deleted data is still in direct memory. As the deleted data passes from direct to indirect memory, the future state error increases and reduces efficiency.}
    \label{fig:experiment-2-quadratic-efficiency-plot}
\end{figure}

\begin{landscape}
\begin{table}[t]
\centering
\caption{
Deletion-mode mechanism evidence. 
Recent deletions test explicit contamination of the o-LBFGS curvature memory, while random and high-gradient deletions often have zero direct memory mass and therefore test indirect trajectory dependence. 
The replay methods exactly recover recent deletions because the relevant counterfactual history lies inside the replay window. 
For random and high-gradient deletions, direct memory mass is typically zero, yet non-replay methods still exhibit large future state-error AUC, indicating that deleted data can persist through the induced parameter trajectory and subsequently generated curvature pairs.
}
\label{tab:deletion-mode-mechanism}
\small
\begin{tabularx}{\textwidth}{llrrrr}
\toprule
Deletion mode 
& Method 
& \makecell{Direct\\memory mass}
& \makecell{Initial\\state error}
& \makecell{Future\\state AUC}
& \makecell{AUC ratio\\vs. no-op} \\
\midrule
Recent 
& No-op deletion 
& 5.000 & 330.537 & 444010.507 & 1.000 \\
Recent 
& Parameter-only 
& 5.000 & 122.800 & 436623.291 & 0.876 \\
Recent 
& Contaminated pair drop 
& 0.000 & 316.275 & 448272.217 & 0.940 \\
Recent 
& Window replay $\tau$ 
& 0.000 & 0.000 & 0.000 & 0.000 \\
Recent 
& Window replay $5\tau$ 
& 0.000 & 0.000 & 0.000 & 0.000 \\
\midrule
Random 
& No-op deletion 
& 0.000 & 187.849 & 449573.044 & 1.000 \\
Random 
& Parameter-only 
& 0.000 & 322.062 & 453589.582 & 1.016 \\
Random 
& Contaminated pair drop 
& 0.000 & 187.853 & 449573.039 & 1.000 \\
Random 
& Window replay $\tau$ 
& 0.000 & 205.209 & 434590.243 & 1.000 \\
Random 
& Window replay $5\tau$ 
& 0.000 & 0.007 & 0.891 & 0.694 \\
\midrule
High-gradient 
& No-op deletion 
& 0.000 & 203.378 & 452026.482 & 1.000 \\
High-gradient 
& Parameter-only 
& 0.000 & 387.741 & 438854.920 & 1.000 \\
High-gradient 
& Contaminated pair drop 
& 0.000 & 203.378 & 458399.960 & 1.000 \\
High-gradient 
& Window replay $\tau$ 
& 0.000 & 256.555 & 443910.442 & 1.000 \\
High-gradient 
& Window replay $5\tau$ 
& 0.000 & 0.009 & 1.055 & 0.996 \\
\bottomrule
\end{tabularx}
\end{table}
\end{landscape}

\begin{landscape}
\begin{table}[t]
\centering
\caption{
We simulate a stream of learning and unlearning requests following a quadratic loss model.
The quadratic ensures a predictably convex environment, while the ridge-logistic setting tests the same interventions under harder nonlinear dynamics. 
We see in both cases that local state edits do not reliably reconstruct the counterfactual optimizer. Replay-based methods, though approximate, empirically reduce future state error. 
}
\label{tab:stream-type-stress-test}
\small
\begin{tabularx}{\textwidth}{llrrrr}
\toprule
Stream 
& Method 
& \makecell{Future\\state AUC}
& \makecell{AUC ratio\\vs. no-op}
& \makecell{Exact recovery\\rate}
& \makecell{Final\\state error} \\
\midrule
Quadratic 
& No-op deletion 
& 1.137 & 1.000 & 0.000 & 0.000004 \\
Quadratic 
& Parameter-only 
& 1.062 & 0.871 & 0.000 & 0.000003 \\
Quadratic 
& Contaminated pair drop 
& 1.221 & 1.000 & 0.000 & 0.000004 \\
Quadratic 
& Window replay $\tau$ 
& 0.022 & 1.000 & 0.333 & 0.000000 \\
Quadratic 
& Window replay $5\tau$ 
& 0.000 & 0.000 & 0.556 & 0.000000 \\
\midrule
Ridge logistic 
& No-op deletion 
& 1012737.545 & 1.000 & 0.000 & 1113.715 \\
Ridge logistic 
& Parameter-only 
& 1046068.869 & 0.999 & 0.000 & 1085.240 \\
Ridge logistic 
& Contaminated pair drop 
& 996251.965 & 1.000 & 0.000 & 1070.716 \\
Ridge logistic 
& Window replay $\tau$ 
& 925329.661 & 0.996 & 0.333 & 840.374 \\
Ridge logistic 
& Window replay $5\tau$ 
& 0.000 & 0.000 & 0.556 & 0.000 \\
\bottomrule
\end{tabularx}
\end{table}
\end{landscape}

\section{Experimental Results}

We evaluate whether data deletion in online L-BFGS can be treated as a parameter-correction problem, or whether successful unlearning requires recovery of the full optimizer state. The experiments show that deletion influence is not localized to the parameter vector. Instead, deleted data affects the coupled state
$$
    \theta_t = (w_t, Z_t),
$$
where $w_t$ is the parameter vector and $Z_t$ is the finite memory of L-BFGS curvature pairs. This coupling matters because future updates depend on both terms. This is important because prior updates impact the path of the optimizer in state space. An intervention that corrects only $w_t$, or only $Z_t$, generally produces a state that may be closer to the counterfactual along one coordinate, but is not itself a state that would have come from the counterfactual stream of events.

We identify three clear patterns across the experiments. First, parameter alignment understates the persistence of deleted information. Second, curvature memory creates a deletion horizon: even after directly contaminated curvature pairs leave the finite memory window, their influence may persist indirectly through later parameters, later curvature pairs, and future update directions. Third, exact recovery is possible only when the intervention reconstructs the joint parameter-memory state, either by oracle replay or by replay over a window that contains the deletion-relevant optimizer history.

\begin{figure}
    \centering
    \includegraphics[width=1\linewidth]{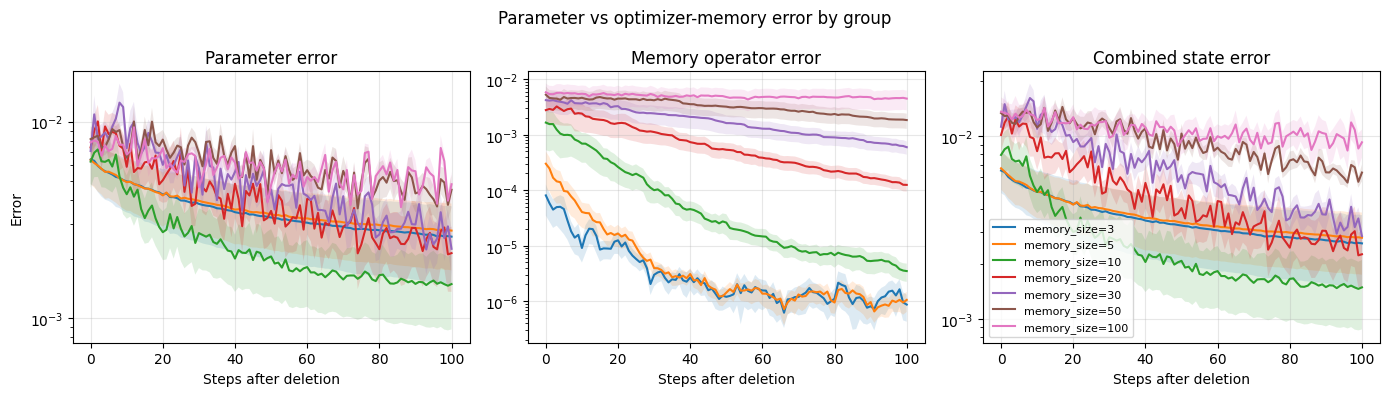}
    \caption{The size of the memory impacts both learning and unlearning efficiency. 10 curvature points yields the strongest parameter alignment, yet shows a consistent increase in the memory operator error. Similarly, a deeper curvature memory visually corresponds to higher memory operator errors, though all interventions show a descent to the counterfactual at rates inversely proportional to the memory size.}
    \label{fig:experiment-1-quadratic-state-error-decomposition}
\end{figure}

\subsection{Parameter alignment understates the persistence of deleted information.}
We begin by comparing parameter error and memory-operator error after deletion. If unlearning were purely a parameter-level problem, then interventions that bring $w_t$ close to the counterfactual parameter $w_t^{-U}$ would also induce future behavior close to the counterfactual trajectory. The results reject this interpretation, as shown in Figure \ref{fig:experiment-1-quadratic-state-error-decomposition}. Methods that intervene on only one component of the optimizer state can reduce one discrepancy while leaving the other unresolved.

This is most visible in the comparison between parameter-only correction, memory-only correction, and replay-based methods. The oracle replay baseline produces zero initial state error, zero final state error, and zero future trajectory error because it reconstructs the deletion-edited counterfactual directly. By contrast, the parameter-only intervention applies a correction to $w_t$ while leaving $Z_t$ unchanged. This creates a parameter-memory pair that is internally inconsistent: the parameter has been moved as if the deleted data were absent, but the curvature memory still encodes update information from the original trajectory. This produces the largest future trajectory error among the non-oracle methods, with future state-error AUC approximately $5.62 \times 10^7$, compared with $1.69 \times 10^7$ for no-op deletion.

This result clarifies why parameter-space deletion is not enough for online L-BFGS. The update direction is computed from both the current gradient and the inverse-Hessian approximation induced by the memory state. If the parameter vector is corrected but the curvature state is not, the next update is not the update that the counterfactual optimizer would have taken. The intervention may therefore remove a first-order trace of the deleted data while preserving a second-order trace in the optimizer memory.

\begin{figure}
    \centering
    \includegraphics[width=1\linewidth]{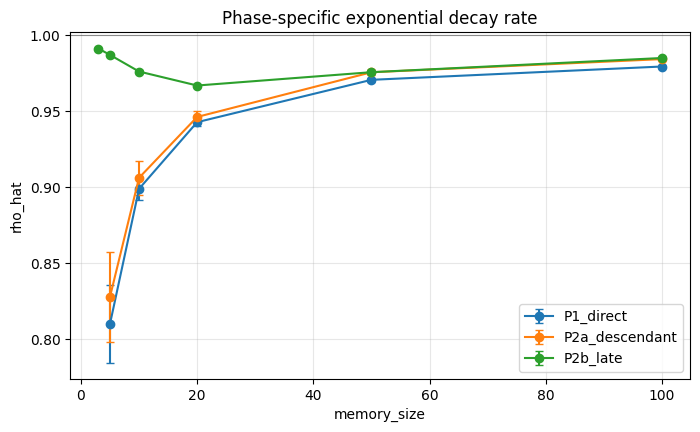}
    \caption{We plot the rate at which the state deviation decays for three separate phases of post-deletion dynamics, specifically for the case of the \textbf{logistic loss surface}. We see that for a fixed number of deleted points, optimizers with fewer curvature points approach the counterfactual faster than those with more curvature information. The optimizer with 20 curvature pairs begins with a decay rate greater than 1, indicating that the optimizer temporarily increases its state deviation for a period directly following deletion. But when the depth of memory increases to 50, then the optimizer decays slowly consistently through the entire post-deletion timeline.}
    \label{fig:experiment-1-logistic-exponential-decay-rate}
\end{figure}

\begin{figure}
    \centering
    \includegraphics[width=1\linewidth]{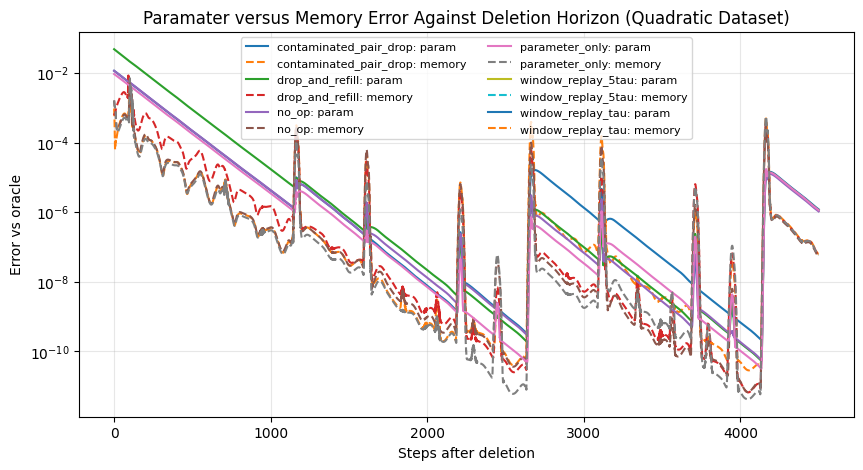}
    \caption{We chart the parameter and memory error for 4500 steps following the deletion at step $\tau=500$. The oracle and window replay interventions recover the counterfactual exactly, hence their error lines are out of scope of the visual. Initially, the drop and refill intervention shows the highest parameter and memory error, indicating that the deletion of useful curvature information hurts the optimizer's performance for the steps immeadiately following the deletion. Once this occurs, the contaminated pair drop and parameter only interventions gradually take the lead as the highest parameter errors. This is due to the misalignment effect: the parameter state is dynamically misaligned with the curvature state from persistent residual information. By the end of the simulation, the interventions with misalignment show the highest degree of error, surpassing even the error of no model correction at all. These unlearning interventions are not only inconsistent, but dynamically inconsistent and unstable within themselves.}
    \label{fig:experiment-1-quadratic-state-error-decomposition}
\end{figure}

Memory-only interventions have the complementary failure mode. Removing or resetting curvature information can reduce direct contamination, but it does not generally reconstruct the parameter trajectory that would have generated the counterfactual memory. In the aggregate results, full memory reset improves future state-error AUC relative to no-op deletion, but it still remains far from oracle recovery. Its final state error is approximately \(3.86 \times 10^4\), with nonzero future trajectory error and large update-direction error. Thus, memory reset can be beneficial as a damage-reduction operation, but it should not be interpreted as exact unlearning. It produces a state with clean or partially cleaned curvature memory attached to a parameter vector that still reflects the deleted trajectory.

The central lesson is that deleted information is not stored in one place. It is distributed across the coupled optimizer state. Successful unlearning therefore requires more than removing contaminated curvature pairs or perturbing parameters. It requires reconstructing a coherent state that could have been produced by the deletion-edited event stream.

\subsection{Finite curvature memory creates a pseudo-deletion horizon.}
The finite-memory decay experiment isolates how long deletion influence persists after the deletion event. Online L-BFGS stores only a finite number of curvature pairs, so directly contaminated curvature pairs eventually leave memory. This might suggest that deletion becomes automatic after $\tau$ additional updates. The results demonstrate otherwise.

Direct memory does clear after the contaminated curvature pairs rotate out of the L-BFGS window. However, state error does not necessarily vanish at that moment. The deleted data may already have changed the parameter trajectory, and that changed trajectory generates later gradients, later curvature pairs, and later inverse-Hessian actions.

This distinction explains the observed relationship between memory size and recovery. Larger curvature windows retain direct contamination for more steps, but they also slow the model's alignment. In the memory-depth experiment, parameter error and memory-operator error decay at different rates. Parameter error may decline smoothly, while memory-operator error is more sensitive to the number of stored curvature pairs. This indicates that the memory window $\tau$ controls the length of the optimizer's historical dependence.

The learning-unlearning cycle requires strategic amounts of forgetting. A larger memory window may improve optimization by retaining more curvature information, but it also increase the burden on the unlearning mechanism. A smaller memory window shortens the deletion horizon, but may weaken the optimizer's ability to fully utilize second-order structure. 

\subsection{Local deletions reduce some errors but do not reconstruct the counterfactual.}
The state-aware benchmark compares deletion-time interventions that modify parameters, memory, both, or neither. The aggregate results show a clear hierarchy. Oracle replay recovers the counterfactual perfectly. Among non-oracle methods, window replay over a larger window gives the lowest future state-error AUC, followed by full memory reset, retain fine-tuning, shorter window replay, contaminated-pair drop, and no-op deletion. The worst methods are drop-and-refill and parameter-only correction.

\begin{figure}
    \centering
    \includegraphics[width=1\linewidth]{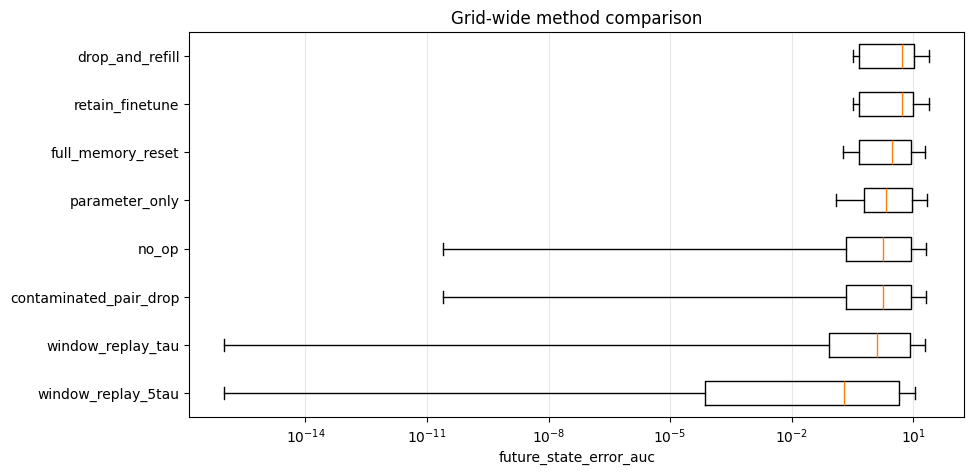}
    \caption{The efficiency of some unlearning methods varies substantially. Window replay interventions and contaminated pair drops work well when the deleted data is still in direct memory, preventing a heavy amount of residual information. When the deletions have passed from direct into indirect memory, the methods are more in alignment with the remaining methods of counterfactual approximation.}
    \label{fig:state-aware-grid-errors}
\end{figure}

The poor behavior of parameter-only correction is especially important. It is the intervention most closely aligned with standard approximate unlearning literature that evaluates unlearning with parameter distance. But in a stateful optimizer, this correction leaves the curvature memory unexamined. The resulting pair $(\tilde w_t, Z_t)$ is neither the original state nor the counterfactual, it's a hybrid state that is misaligned with itself. The empirical consequence is large future trajectory error and large future state-error AUC.

Drop-and-refill fails for a related reason. It discards both parameter and curvature information and then allows the optimizer to continue from the remaining stream. This removes contaminated information aggressively, but it also destroys useful information shared by the actual and counterfactual histories. Since the retain data substantially overlaps between the two histories, indiscriminate erasure can move the optimizer farther from the counterfactual than doing nothing. In the aggregate results, drop-and-refill has future state-error AUC approximately $4.80 \times 10^7$, substantially worse than no-op deletion.

Full memory reset and contaminated-pair drop perform better than no-op in the aggregate results, but their interpretation is different from replay. They reduce some memory-based error, especially when deleted data is still directly represented in the curvature window, but they do not reconstruct the counterfactual optimizer state. Their update-direction errors remain large, indicating that the induced future optimization behavior still differs from the counterfactual. These methods are therefore best understood as partial mitigation strategies. They can reduce residual contamination, but they do not solve the misaligned state.

\begin{figure}
    \centering
    \includegraphics[width=1\linewidth]{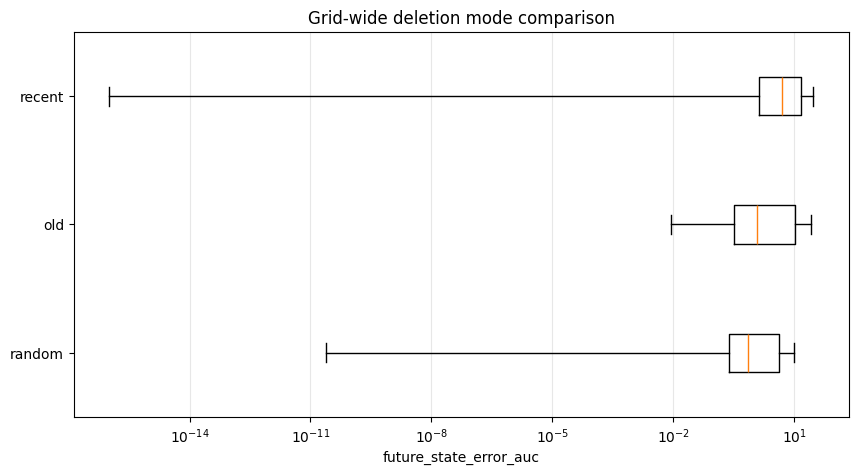}
    \caption{Older data is harder to forget. These data have almost certainly passed from direct into indirect memory, influencing the trajectory of later curvature points in a way that is difficult to discern.}
    \label{fig:experiment-1-deletion-mode-comparison}
\end{figure}

\subsection{Replay succeeds when strategically timed.}
Replay-based methods work differently than other interventions because they don't blindly adjust only a single model state. They regenerate the dual parameter-memory state from data that is consistent with the deletion request. This explains why window replay can approach exact recovery under specific conditions (ie. if the deleted data is a subset of the replay window). But the efficiency of replay methods reduces when deleted data passes from direct to indirect memory.

Window replay is a more realistic approximation of an oracle retrained model. Instead of replaying the entire stream, it maintains a recent window of observations and reconstructs the optimizer state from that window after deletion. In the aggregate benchmark, the larger window replay method has the best non-oracle future state-error AUC, approximately $1.49 \times 10^7$, and improves over both no-op deletion and the shorter replay window. It had the lowest future state AUC out of any non-oracle method in 40 of the 108 configuration experiments ran. 

\section{Discussion}
The experiments suggest that unlearning in stateful optimizers should be evaluated as a state reconstruction problem. Parameter-only correction can move \(w_t\) toward \(w_t^{-U}\), and memory-only correction can remove visibly contaminated curvature pairs from \(Z_t\), but neither operation guarantees that the resulting pair \((\tilde w_t,\tilde Z_t)\) lies on the counterfactual trajectory. This distinction matters because future o-LBFGS updates are generated by the interaction between the current gradient and the inverse-Hessian approximation induced by memory. A state that is only locally improved will induce an optimizer whose memory is inconsistent with its own parameters, inducing instability in parameter space.

The finite-memory structure of o-LBFGS creates a natural direct deletion horizon: curvature pairs generated from deleted samples eventually leave the memory window. This should not be confused with full unlearning. Direct clearance removes explicit references to deleted data, but the deleted samples may already have changed the parameter trajectory. Later curvature pairs are then generated from parameters that differ from the counterfactual parameters, preserving an indirect residue even after direct memory mass is zero.

The state-alignment view also clarifies what a certificate for stateful unlearning would need to control. Existing certified unlearning mechanisms typically calibrate noise to a sensitivity bound over parameters or empirical-risk minimizers. For o-LBFGS, the relevant sensitivity includes both the parameter and the memory-induced update operator. A state certificate should therefore bound deviations in a metric that captures future optimization behavior, not merely parameter distance.

\section{Limitations and Future Work}
This study is limited in four ways. First, the theoretical bounds rely on convexity, smoothness, bounded gradients, bounded inverse-Hessian approximations, and a contractive update assumption. These assumptions isolate the state-alignment mechanism but do not cover general nonconvex training. Second, the experiments use synthetic quadratic and ridge-logistic streams, which allow controlled counterfactual replay but do not yet demonstrate performance on large-scale deployed models. Third, the replay interventions assume access to sufficient event history. This may be infeasible in the case of potentially unbounded data streams and memory-constrained environments like IoT devices.

\newpage

\bibliographystyle{plain}
\bibliography{references}

\newpage

\appendix

\section{Convexity Assumptions}
\label{sec:convexity-assumptions}

\begin{assumption}[Strong convexity and smoothness]
\label{assump: strong-convexity}
Each loss $\ell_t(w)$ is $\mu$-strongly convex and $L$-smooth.
\end{assumption}

\begin{assumption}[Bounded stochastic gradients]
\label{assump: bounded-gradients}
There exists $G<\infty$ such that
\[
\|\nabla \ell_t(w)\|\le G
\]
for all relevant $t,w$.
\end{assumption}

\begin{assumption}[Bounded inverse-Hessian approximations]
\label{assump: bounded-hessian}
The o-LBFGS inverse curvature matrices satisfy
\[
m I \preceq H_t \preceq M I
\]
for constants $0<m\le M<\infty$.
\end{assumption}

\begin{assumption}[Contractive update map]
\label{assump: contractive-updates}
The optimizer update map $\Phi_t$ satisfies
\[
d_{\mathcal Z}(\Phi_t(Z),\Phi_t(Z'))\le \rho d_{\mathcal Z}(Z,Z')
\]
for some $\rho<1$.
\end{assumption}

\section{Methods}
We conduct two experiments. The finite-memory decay experiment measures how optimizer-state error decays after deleted curvature pairs leave the explicit o-LBFGS memory window. The state-aware unlearning benchmark compares deletion-time interventions that modify parameters, memory, both, or neither.

\begin{table}
\centering
\begin{tabular}{|c|c|}
\hline
Quantity & Default value \\
\hline
Dimension & \(d=25\) \\
Stream length, finite-memory decay & \(T=700\) \\
Deletion time, finite-memory decay & \(t_{\mathrm{del}}=300\) \\
Post-deletion horizon, finite-memory decay & \(H=250\) \\
Stream length, state-aware benchmark & \(T=5000\) \\
Deletion time, state-aware benchmark & \(t_{\mathrm{del}}=500\) \\
Post-deletion horizon, state-aware benchmark & \(H=4500\) \\
o-LBFGS memory size & \(\tau=10\) \\
Deletion size & \(|U|=5\) \\
Probe vectors & \(m_q=32\) \\
State memory weight & \(\lambda_Z=1\) \\
Ridge parameter, logistic regime & \(\lambda=0.05\) \\
\hline
\end{tabular}
\caption{Default experimental configuration. Grid experiments vary condition number, memory size, deletion mode, drift, and random seed as specified in the results.}
\label{tab:experiment-config}
\end{table}

\subsection{Synthetic Data-Generating Processes}
We evaluate both experiments on two synthetic online-learning regimes: drifting strongly convex quadratic losses and ridge-regularized logistic losses with Gaussian features. These regimes allow us to control curvature, condition number, drift, and deletion timing while preserving a known counterfactual replay procedure.

\paragraph{Drifting quadratic losses.}
For each event \(t\), the learner observes a quadratic loss
\[
\ell_t(w)
=
\frac{1}{2}(w-a_t)^\top H_t(w-a_t),
\]
where \(H_t \in \mathbb{R}^{d \times d}\) is symmetric positive definite. We construct \(H_t\) so that its eigenvalues lie in \([\mu,\kappa\mu]\), giving condition number \(\kappa\). The minimizer \(a_t\) follows a smooth drifting trajectory,
\[
a_t
=
a_0
+
\delta_a \sin(2\pi t/P_a)u_1
+
\delta_a \cos(2\pi t/P_a)u_2
+
\sigma_a \xi_t,
\]
where \(u_1,u_2\) are fixed unit directions, \(P_a\) is the minimizer-drift period, and \(\xi_t\) is isotropic Gaussian noise. When curvature drift is enabled, we interpolate between two independently generated positive definite matrices,
\[
H_t=(1-\alpha_t)H_0+\alpha_t H_1,
\qquad
\alpha_t=\frac{\delta_H}{2}\{1+\sin(2\pi t/P_H)\}.
\]
This setting isolates the effect of finite curvature memory under a controlled strongly convex loss landscape.

\paragraph{Ridge logistic losses with Gaussian features.}
For the logistic regime, each event consists of a feature-label pair \((x_t,y_t)\), where \(y_t\in\{-1,+1\}\). Features are sampled from a Gaussian distribution,
\[
x_t \sim \mathcal{N}(0,\Sigma_t),
\]
with \(\Sigma_t\) constructed to have condition number \(\kappa\). As in the quadratic case, optional curvature drift is induced by interpolating between two covariance matrices:
\[
\Sigma_t=(1-\alpha_t)\Sigma_0+\alpha_t\Sigma_1.
\]
Labels are generated from a drifting logistic model,
\[
\mathbb{P}(y_t=1\mid x_t)=\sigma(x_t^\top \beta_t),
\qquad
\beta_t=\beta_0+\delta_\beta \sin(2\pi t/P_\beta)v,
\]
where \(\sigma(z)=(1+e^{-z})^{-1}\). The online loss is
\[
\ell_t(w)
=
\log\{1+\exp(-y_t x_t^\top w)\}
+
\frac{\lambda}{2}\|w\|_2^2.
\]
The ridge term makes each loss \(\lambda\)-strongly convex, while the Gaussian feature covariance controls the conditioning of the observed gradients.

\subsection{State Alignment Metrics}
The o-LBFGS state at time \(t\) is
\[
\theta_t=(w_t,Z_t),
\]
where \(w_t\in\mathbb{R}^d\) is the parameter vector and \(Z_t\) is the finite memory of curvature pairs. For an intervened state \(\tilde\theta_t\) and the deletion-edited counterfactual state \(\theta_t^{-U}\), we measure three discrepancies.

\paragraph{Parameter error.}
The parameter error is the distance between the exact and counterfactual on some Euclidean metric.
\[
E_w(t)=\|\tilde w_t-w_t^{-U}\|_2.
\]

\paragraph{Memory-operator error.}
Because L-BFGS memory is used through inverse-Hessian-vector products, we compare memory states by their induced preconditioning actions. Let \(q_1,\ldots,q_{m_q}\) be fixed unit probe vectors shared across all methods within a run. We define
\[
E_Z(t)
=
\left(
\frac{1}{m_q}
\sum_{j=1}^{m_q}
\left\|
H_{\tilde Z_t}q_j
-
H_{Z_t^{-U}}q_j
\right\|_2^2
\right)^{1/2},
\]
where \(H_Zq\) denotes the inverse-Hessian action computed by the two-loop recursion used in o-LBFGS updates.

\paragraph{Combined state error.}
We report the weighted state discrepancy
\[
E_\Theta(t)=E_w(t)+\lambda_Z E_Z(t),
\]
where \(\lambda_Z\geq0\) controls the relative weight assigned to memory-operator misalignment.

\paragraph{Update-direction error.}
To measure whether two states induce the same future optimization behavior, we also compare their update directions. For future event \(e_t\),
\[
D_{\mathrm{upd}}(t)
=
1-
\frac{
d_t^\top d_t^{-U}
}{
\|d_t\|_2\|d_t^{-U}\|_2
},
\]
where
\[
d_t=-H_{\tilde Z_t}\nabla \ell_t(\tilde w_t),
\qquad
d_t^{-U}=-H_{Z_t^{-U}}\nabla \ell_t(w_t^{-U}).
\]
This quantity is zero when the two optimizer states induce identical update directions.

\subsection{Experiment 1: Finite-Memory Decay}
The finite-memory decay experiment tests whether removing deleted samples from the explicit o-LBFGS memory window is sufficient to align the optimizer with the deletion-edited counterfactual. The experiment compares two trajectories after a deletion event: the actual trajectory, which has already processed the deleted events, and the counterfactual trajectory, which is obtained by replaying the same prefix while skipping the deleted events.

For each run, we generate an event stream \((e_1,\ldots,e_T)\) and choose a deletion time \(t_{\mathrm{del}}\). We first train the actual optimizer on the full prefix \(e_{1:t_{\mathrm{del}}}\). We then select a deletion set \(U\subseteq \{1,\ldots,t_{\mathrm{del}}\}\) according to one of several deletion modes: recent deletions, old deletions, random deletions, or high-gradient deletions. The counterfactual optimizer is constructed by replaying the same prefix while skipping all events in \(U\).

After deletion, both trajectories are propagated on the same future stream \(e_{t_{\mathrm{del}}+1:t_{\mathrm{del}}+H}\). At each post-deletion step \(k\), we record parameter error \(E_w(k)\), memory-operator error \(E_Z(k)\), combined state error \(E_\Theta(k)\), update-direction error \(D_{\mathrm{upd}}(k)\), and direct memory mass
\[
M_{\mathrm{direct}}(k)
=
\sum_{(s_i,y_i)\in Z_{t_{\mathrm{del}}+k}}
\mathbf{1}\{ \mathrm{source}(s_i,y_i)\cap U\neq\emptyset\}.
\]
This separates explicit retention of deleted curvature pairs from indirect persistence through the parameter trajectory and newly generated curvature pairs.

We define the empirical direct-clearance time as
\[
\hat h_{\mathrm{direct}}
=
\min\{k\geq0: M_{\mathrm{direct}}(k)=0\}.
\]
To quantify post-clearance decay, we fit an exponential envelope to the state-error trajectory,
\[
E_\Theta(k)\approx C\rho^k,
\]
over specified post-deletion intervals and report \(\hat\rho\) as an empirical decay rate. This allows us to test whether larger o-LBFGS memory windows slow alignment even after directly contaminated curvature pairs have left memory.

\subsection{Experiment 2: State-Aware Unlearning Benchmark}
The state-aware unlearning benchmark evaluates whether deletion-time interventions that modify optimizer memory improve alignment with the deletion-edited counterfactual. Unlike Experiment 1, which measures natural decay after deletion, Experiment 2 applies explicit unlearning operators at \(t_{\mathrm{del}}\) and compares their future trajectories to oracle replay.

For each run, we use the following protocol.
\begin{enumerate}
    \item Generate an online event stream and train the actual o-LBFGS optimizer on the prefix \(e_{1:t_{\mathrm{del}}}\).
    \item Select a deletion set \(U\) from the prefix using the specified deletion mode.
    \item Construct the oracle counterfactual state \(\theta_{t_{\mathrm{del}}}^{-U}\) by replaying the prefix while skipping all events in \(U\).
    \item Apply each unlearning intervention to the actual state \(\theta_{t_{\mathrm{del}}}\), producing an intervened state \(\tilde\theta_{t_{\mathrm{del}}}^{(r)}\) for method \(r\).
    \item Propagate every intervened state and the oracle state on the same future event stream.
    \item Record initial, final, and cumulative trajectory discrepancies relative to the oracle.
\end{enumerate}

\begin{table}
    \centering
    \begin{tabular}{|c|>{\centering\arraybackslash}p{0.5\linewidth}|}\hline
         Intervention&  Description\\\hline
         Oracle Replay&  This is the gold standard. The model is retrained from scratch without the offending data, serving as a baseline.\\\hline
         No-Op Deletion&  The deletion is registered and excluded from future loss evaluations, but the parameter and optimizer states remain unchanged.\\\hline
 Parameter-Only& A Newton update is applied to the parameters, but the memory state $Z_t$ remains unchanged.\\\hline
 Retain Fine-Tuning& The model state remains the same as pre-deletion, but the update map is that of the counterfactual.\\\hline
 Full Memory Reset& The curvature points are reset, but the parameter state remains unchanged.\\\hline
 Contaminated Curvature Pair Drop& We track the curvature pairs added to the memory state. When the points undergo deletion, the corresponding curvature pairs are also removed from memory. They are not replaced after deletion, and are allowed to replace themselves.\\\hline
 Window Replay& Some window of observations $\tau$ is maintained. Once the deletions have been processed, the curvature points and parameter weights are retrained from the window of observations.\\\hline
 Drop and Refill& The parameter and curvature states of the model are both reset and allowed to retrain from the remaining stream events.\\ \hline
    \end{tabular}
    \caption{We perform 8 interventions to stress test the persistence of information following deletions. Oracle replay is the baseline, where the optimizer is retrained from scratch on the entire training set without the deleted data.}
    \label{tab:intervention-table}
\end{table}

\section{Experimental Data}
\begin{landscape}
\begin{table}[t]
\centering
\caption{
Global comparison across all grid configurations. 
The table reports median errors unless otherwise indicated. 
The oracle replay baseline defines the counterfactual target and is zero by construction. 
Among non-oracle methods, only the replay-based interventions reduce the parameter, memory, state, future-trajectory, and update-direction errors simultaneously. 
Parameter-only correction lowers the median initial parameter error, but its mean future state-error AUC is substantially larger than no-op, indicating heavy-tailed failures when the curvature memory remains inconsistent with the corrected parameter vector.
}
\label{tab:global-method-comparison}
\small
\begin{tabularx}{\textwidth}{lrrrrrrrr}
\toprule
Method 
& \makecell{Initial\\param.}
& \makecell{Initial\\memory}
& \makecell{Initial\\state}
& \makecell{Future\\state AUC}
& \makecell{Mean future\\state AUC}
& \makecell{Final\\state}
& \makecell{Update-dir.\\AUC}
& \makecell{Exact\\recovery} \\
\midrule
Oracle replay 
& 0.000 & 0.000 & 0.000 & 0.000 & 0.000 & 0.000 & 0.000 & 1.000 \\
No-op deletion 
& 185.509 & 0.122 & 187.849 & 444022.295 & 16904510.339 & 153.498 & 416.303 & 0.000 \\
Parameter-only 
& 100.245 & 0.122 & 122.879 & 436635.702 & 56162902.333 & 106.743 & 410.780 & 0.000 \\
Contaminated pair drop 
& 185.509 & 0.014 & 187.853 & 448283.206 & 15874059.253 & 45.583 & 419.470 & 0.000 \\
Full memory reset 
& 185.509 & 8.330 & 196.092 & 445062.597 & 15326664.006 & 174.452 & 431.021 & 0.000 \\
Retain fine-tuning 
& 135.252 & 0.043 & 138.433 & 424588.071 & 15415561.418 & 102.863 & 380.273 & 0.000 \\
Drop and refill 
& 97.938 & 0.043 & 106.727 & 438526.141 & 48038123.069 & 31.674 & 406.351 & 0.000 \\
Window replay $\tau$ 
& 0.007 & 0.002 & 0.010 & 1.069 & 15635043.548 & 0.000004 & 0.019 & 0.333 \\
Window replay $5\tau$ 
& 0.000 & 0.000 & 0.000 & 0.000 & 14929109.318 & 0.000 & 0.000 & 0.556 \\
\bottomrule
\end{tabularx}
\end{table}
\end{landscape}

\subsection{Phases of Post-Deletion Dynamics}

The state deviation between the observed and counterfactual decays at some geometric rate. Conceptually, we would expect the effect of the deleted information to decay as additional learning occurs, and the rate of decay would itself decrease as training progresses. We analyze this by dividing the post-deletion events into three categories:

\begin{enumerate}
    \item P1 is the post-deletion phase where the deleted data remains in the range of curvature pairs. 
    \item P2 is the post-deletion phase where the deleted data has passed from direct to indirect memory, but still remains within $2 \tau$ of the time of deletion. This is considered to be some period of reasonable indirect influence.
    \item P3 is the post-deletion phase that goes from $2 \tau$ to $T$ and encompasses the decay of deleted information within the indirect memory.
\end{enumerate}

\begin{figure}
    \centering
    \includegraphics[width=1\linewidth]{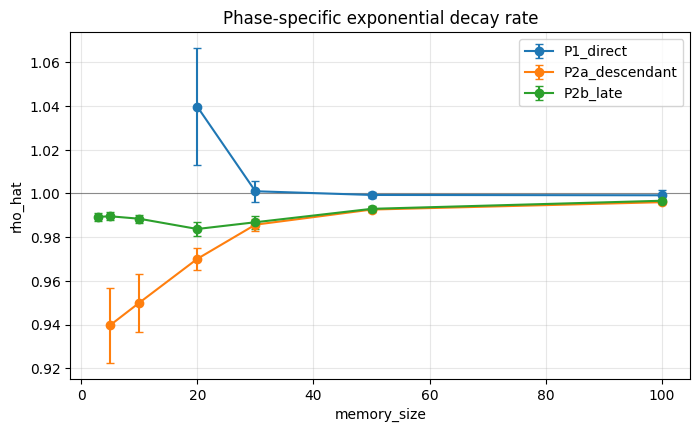}
    \caption{Phase-specific exponential decay rates for the quadratic stream. Although the quadratic loss surface is more controlled than the logistic setting, the direct post-deletion phase is not uniformly contractive. For intermediate memory depth $\hat \rho > 1,$ indicating the temporary amplification of state deviation. The later phases remain close to one, showing slow residual decay after direct clearance. So even under a convex quadratic objective, finite-memory o-LBFGS does not automatically erase deletion influence once contaminated curvature pairs rotate out.}
    \label{fig:experiment-1-quadratic-exponential-decay-rate}
\end{figure}

\begin{figure}
    \centering
    \includegraphics[width=1\linewidth]{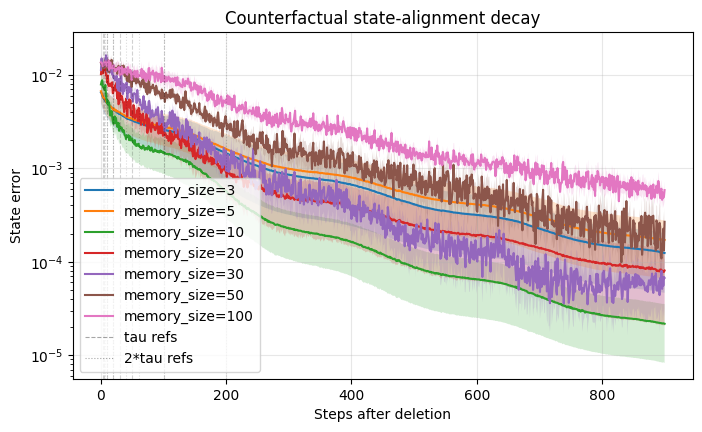}
    \caption{Counterfactual state-alignment error by memory depth after deletion. The $\tau$ and $2 \tau$ reference lines mark the nominal direct-memory and early indirect-memory horizons, but state deviation continues to decay long after these points. This shows that direct clearance of contaminated curvature pairs is not equivalent to full unlearning. Memory depth creates a tradeoff: very small windows clear historical information quickly but may lose useful curvature structure, while large windows preserve curvature information at the cost of slower counterfactual recovery. The persistence of error beyond $2 \tau$ is evidence of indirect memory in the optimizer trajectory.}
    \label{fig:experiment-1-quadratic-memory-size-variant}
\end{figure}

\begin{figure}
    \centering
    \includegraphics[width=1\linewidth]{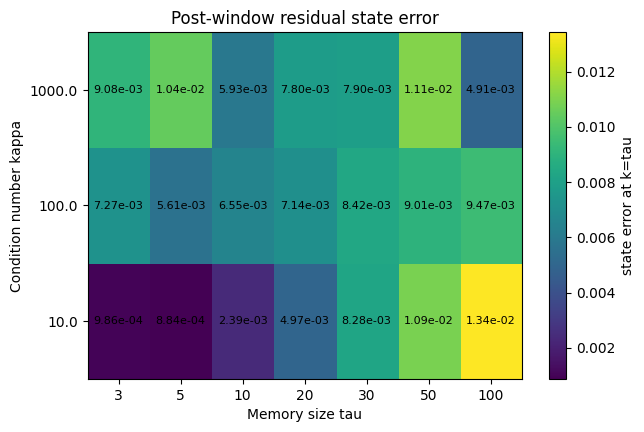}
    \caption{We also plot state deviations for the case of the quadratic loss surfaces under various degrees conditioning. Higher condition numbers indicate the model's strong sensitivity to a subset of its parameters.}
    \label{fig:experiment-1-quadratic-residual-error-heatmap}
\end{figure}

\begin{figure}
    \centering
    \includegraphics[width=1\linewidth]{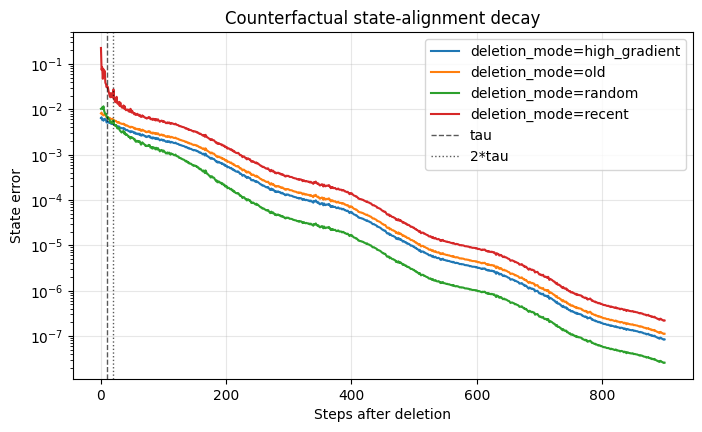}
    \caption{The state error is plotted for different deletion methods. The unlearning task is "easier" for more recent data, where the influence of the data has not persisted far into the optimizer's memory. Older deletions require a stronger degree of intervention, \textbf{indicating a functional incentive to accomodate deletion requests quickly} in the case of unlearning in large-scale production systems.}
    \label{fig:experiment-1-quadratic-sampling-intervention}
\end{figure}

\begin{figure}
    \centering
    \includegraphics[width=1\linewidth]{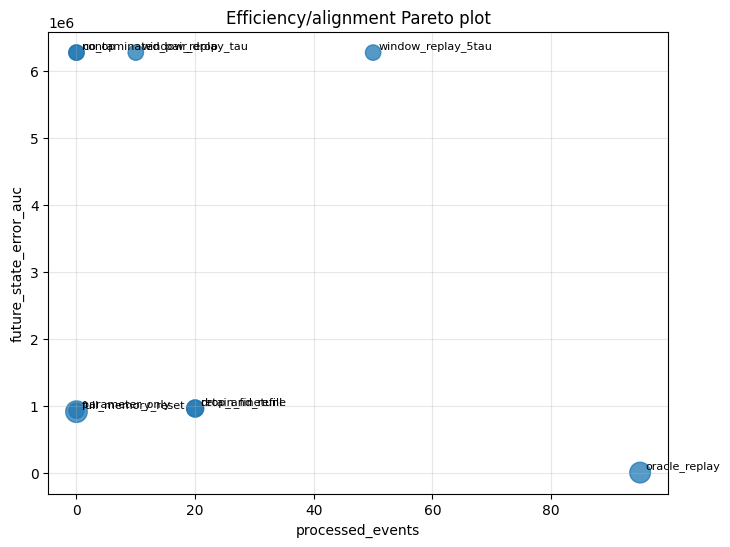}
    \caption{We plot the efficiency of different unlearning methods for a logistic loss surface. No method achieves true counterfactual recovery, but some approximations are more appropriate than others. The window replay interventions show the weakest performance, contrasting strongly with its performance in the quadratic loss case. In such cases, local state adjustments perform much better. This is likely due to the lack of a convex loss surface and a loss surface that shows a high degree of instability.}
    \label{fig:experiment-2-logistic-efficiency-plot}
\end{figure}

\begin{landscape}
\begin{table}[t]
\centering
\caption{
Evidence that reliable recovery requires counterfactual replay. 
Ratios are computed relative to no-op deletion within matched experimental configurations. 
Values below one improve on no-op. 
Although parameter-only correction beats no-op in a majority of configurations by median ratio, it never exactly recovers the future counterfactual trajectory and has a very large mean ratio, reflecting unstable failures. 
We see that window replay has the lowest future state error among non-oracle methods, supporting the claim that unlearning requires reconstruction of the coupled parameter-memory state rather than isolated component repair.
}
\label{tab:counterfactual-replay-evidence}
\small
\begin{tabularx}{\textwidth}{lrrrrr}
\toprule
Method 
& \makecell{Median AUC\\ratio vs. no-op}
& \makecell{Mean AUC\\ratio vs. no-op}
& \makecell{Share better\\than no-op}
& \makecell{Exact recovery\\rate}
& \makecell{Best non-oracle\\share} \\
\midrule
No-op deletion 
& 1.000 & 1.000 & 0.000 & 0.000 & 0.074 \\
Parameter-only 
& 0.971 & 16.985 & 0.574 & 0.000 & 0.102 \\
Contaminated pair drop 
& 1.000 & 1.064 & 0.333 & 0.000 & 0.000 \\
Full memory reset 
& 1.084 & 3.486 & 0.370 & 0.000 & 0.056 \\
Retain fine-tuning 
& 1.208 & 3.014 & 0.361 & 0.000 & 0.093 \\
Drop and refill 
& 1.310 & 22.908 & 0.361 & 0.000 & 0.028 \\
Window replay $\tau$ 
& 1.000 & 0.627 & 0.463 & 0.333 & 0.370 \\
Window replay $5\tau$ 
& 0.000 & 0.403 & 0.769 & 0.556 & 0.278 \\
\bottomrule
\end{tabularx}
\end{table}
\end{landscape}

\begin{landscape}
\begin{table}[t]
\centering
\caption{
Memory horizon and replay-depth evidence. 
The finite-memory hypothesis predicts that recovery should improve as the replay window covers a larger portion of the history that generated the current curvature state. 
The $5\tau$ intervention increasingly reconstructs the counterfactual state as memory size grows, reaching exact future-AUC recovery for all $\tau=20$ configurations. 
This supports the view that o-LBFGS admits a finite counterfactual recovery horizon, but that the relevant horizon is not merely the formal curvature window; it also includes enough preceding updates to regenerate the parameter trajectory that produced the current memory.
}
\label{tab:memory-horizon-replay-depth}
\small
\begin{tabularx}{\textwidth}{llrrrr}
\toprule
Memory size $\tau$ 
& Method 
& \makecell{Future\\state AUC}
& \makecell{AUC ratio\\vs. no-op}
& \makecell{Exact recovery\\rate}
& \makecell{Mean processed\\events} \\
\midrule
5 
& No-op deletion 
& 445839.058 & 1.000 & 0.000 & 0 \\
5 
& Parameter-only 
& 438049.480 & 0.866 & 0.000 & 0 \\
5 
& Contaminated pair drop 
& 450323.177 & 1.000 & 0.000 & 0 \\
5 
& Window replay $\tau$ 
& 1.053 & 1.000 & 0.333 & 5 \\
5 
& Window replay $5\tau$ 
& 1.052 & 0.764 & 0.333 & 25 \\
\midrule
10 
& No-op deletion 
& 449572.624 & 1.000 & 0.000 & 0 \\
10 
& Parameter-only 
& 436635.257 & 0.978 & 0.000 & 0 \\
10 
& Contaminated pair drop 
& 449572.623 & 1.000 & 0.000 & 0 \\
10 
& Window replay $\tau$ 
& 1.105 & 1.000 & 0.333 & 10 \\
10 
& Window replay $5\tau$ 
& 0.893 & 0.792 & 0.333 & 50 \\
\midrule
20 
& No-op deletion 
& 444021.687 & 1.000 & 0.000 & 0 \\
20 
& Parameter-only 
& 447984.148 & 0.986 & 0.000 & 0 \\
20 
& Contaminated pair drop 
& 448282.602 & 1.000 & 0.000 & 0 \\
20 
& Window replay $\tau$ 
& 1.148 & 0.910 & 0.333 & 20 \\
20 
& Window replay $5\tau$ 
& 0.000 & 0.000 & 1.000 & 100 \\
\bottomrule
\end{tabularx}
\end{table}
\end{landscape}

\end{document}